\newcommand{\minus}{\scalebox{0.65}[1.0]{$-$}}
\newcommand{\minusminus}{\minus 1/\minus 1 }
\newcommand{\tm}{\textit{Magic} Turing machine }
\theoremstyle{plain}
\newtheorem{thm}{Theorem}
\theoremstyle{definition}
\newtheorem{df}{Definition}
\newtheorem{cor}{Corollary}
\theoremstyle{remark}
\newtheorem{conj}{Conjecture}
\newtheorem{quest}[thm]{Question}
\newenvironment{usethmcounterof}[1]{%
  \thm}{\endthm\addtocounter{thm}{-1}}
\begin{document}
\title{Magic: the Gathering is as Hard as Arithmetic}

\author[1]{Stella Biderman}
\affil[1]{The Georgia Institute of Technology\\
  Atlanta, United States of America\\
  \texttt{stellabiderman@gatech.edu}}
\authorrunning{S. Biderman}

\Copyright{Stella Biderman}

\subjclass{Theory of computation $\rightarrow$ Algorithmic game theory}
\keywords{Turing machines, computability theory, Magic: the Gathering, two-player games}

\serieslogo{}
\volumeinfo
  {Billy Editor and Bill Editors}
  {2}
  {Conference/workshop/symposium title on which this volume is based on}
  {1}
  {1}
  {1}
\EventShortName{}
\DOI{10.4230/OASIcs.xxx.yyy.p}

\maketitle

\begin{abstract}
\textit{Magic: the Gathering} is a popular and famously complicated card game about magical combat. Recently, several authors including Chatterjee and Ibsen-Jensen (2016) and  Churchill, Biderman, and Herrick (2019) have investigated the computational complexity of playing \textit{Magic} optimally. In this paper we show that the ``mate-in-$n$'' problem for \textit{Magic} is $\Delta^0_n$-hard and that optimal play in two-player Magic is non-arithmetic in general. These results apply to how real Magic is played, can be achieved using standard-size tournament legal decks, and do not rely on stochasticity or hidden information. Our paper builds upon the construction that Churchill, Biderman, and Herrick (2019) used to show that this problem was at least as hard as the halting problem.
\end{abstract}

\section{Introduction}
\subsection{Background}
\textit{Magic: the Gathering} (also known as \textit{Magic}) is a popular trading card game owned by Wizards of the Coast. Due to it's highly complex game play and popularity with mathematically-minded players, there has been a significant amount of research on the computational complexity of the game \cite{ci:blocking,c:old-tm,c:forum,cbh:mtg-tm,e:thesis}.

Although there are many questions about games that are investigated in algorithmic game theory, the central and most important one is \textit{how hard is it to play a game optimally?} The investigation of this question for \textit{Magic} was begun by Churchill, Biderman, and Herrick (2019) \cite{cbh:mtg-tm}, who proved the following theorem:

\begin{restatable}{thm}{zerop}
\label{thm:0player}
Determining the outcome of a game of \textit{Magic: the Gathering} in which all remaining moves are forced is undeciable.
\end{restatable}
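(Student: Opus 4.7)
The plan is to reduce from the halting problem: given a Turing machine $M$ and input $w$, I would exhibit a \textit{Magic} game state in which every subsequent action is forced and the game terminates if and only if $M$ halts on $w$. The encoding I envisage represents the tape of $M$ as a linearly ordered sequence of creature tokens on the battlefield, the tape alphabet as distinct creature types, the current head position by a distinguished ``head'' creature, and the internal state of $M$ by the type of that creature (or by a small family of auxiliary markers).

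The key mechanism is to realize the transition function via ``whenever a creature of type $X$ dies, create a creature token of type $Y$'' triggered abilities. Each turn, combat is rigged so that exactly one creature---the head---dies, which fires a cascade of such triggers that rewrites the neighbourhood of the head according to one step of $M$. I would make combat deterministic by choosing appropriate power, toughness, first strike, and deathtouch values, combined with effects that compel specific creatures to attack and specific creatures to block.

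To remove every remaining point of choice in \textit{Magic}'s turn structure, I would arrange that both players have empty hands, no mana available to activate abilities, and only creatures carrying ``attacks each turn if able'' and ``blocks each turn if able'' riders, so that priority passes resolve uniquely. Halting is detected by wiring $M$'s halt state to an effect that immediately reduces a player's life total to zero, so the game ends exactly when $M$ halts on $w$.

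The principal obstacle will be verifying that every single rule-level decision point in \textit{Magic}---declaring attackers, declaring and ordering blockers, assigning combat damage, ordering simultaneously triggered abilities, and so on---is genuinely forced by the construction, rather than merely typically forced. This is not a deep combinatorial difficulty but a careful audit against the comprehensive rules, and it is where any naive attempt at such a reduction is most likely to go wrong.
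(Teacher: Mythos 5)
Your plan is correct in its essentials and matches the architecture of the cited Churchill--Biderman--Herrick construction that this paper builds on: a reduction from the halting problem, the tape realized as an ordered line of creature tokens, tape symbols as creature types, and the transition function implemented by ``whenever a creature of type $X$ dies, create a token of type $Y$'' triggers (this is exactly the hacked \textbf{Rotlung Reanimator} gate described in Appendix~\ref{app:rules}). Where you diverge is in how the machine is clocked: you drive each step through combat, killing the head creature with rigged attacks and blocks, whereas the actual construction deliberately \emph{forbids} combat (via \textbf{Blazing Archon}/\textbf{Moat}) and instead uses \textbf{Wild Evocation} to force the casting of mass $\minusminus$-counter spells such as \textbf{Infest} each turn, with \textbf{Wheel of Sun and Moon} recycling them; the head position and direction of motion are managed by selectively strengthening creatures on one side of the head, and the machine's internal state is encoded by which family of Reanimator-style triggers is phased in, not by the type of a head creature. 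The published route buys exactly the thing you flag as your principal obstacle: combat is the single most choice-laden subsystem in the game (attacker declaration, blocker declaration and ordering, damage assignment), and eliminating it wholesale is far easier to audit against the comprehensive rules than forcing every combat decision with riders. Your approach is not wrong in principle, but you should expect the ``careful audit'' you describe to be substantially harder than you suggest, and you would still need to handle APNAP trigger ordering, which the real construction resolves by ensuring simultaneous triggers are split between the two players.
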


This theorem establishes \textit{Magic} is unlike many other games in that there can be no remaining unforced moves for either player, and yet the outcome of the game can be difficult to determine. Based on this, we define an \textit{end game} in \textit{Magic} to be a game state in which there are no remaining decisions for either player to make and posit that the interesting version of the traditional ``mate-in-$n$'' question is to identify a sequences of $n$ moves that result in an end game, even if the game isn't formally over at that point in time. With this idea in mind, we prove the following theorems:

\begin{restatable}{thm}{mate}
\label{thm:mate}
The mate-in-$n$ problem for \textit{Magic: the Gathering} is $\Delta^0_n$-hard.
\end{restatable}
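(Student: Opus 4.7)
\emph{Proof plan.}
The approach is a many-one reduction from a $\Sigma^0_n$-complete (and therefore $\Delta^0_n$-hard) set to the mate-in-$n$ problem. A natural choice of complete set is the truth set of prenex arithmetical formulas
\[
\exists x_1 \, \forall x_2 \, \cdots \, Q_n x_n \ \varphi(x_1,\ldots,x_n)
\]
with $\varphi$ recursive; given such a formula, I would construct a \emph{Magic} starting state in which player 1 has a forcing sequence of $n$ moves to an end-game win iff the formula is true.

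The starting point is the construction underlying Theorem~\ref{thm:0player}. Given any Turing machine $M$ and input $w$, that construction produces a legal \emph{Magic} state whose entirely forced continuation reaches an end game in which player 1 wins iff $M$ accepts $w$. I would use this machinery to evaluate the recursive kernel $\varphi$ once concrete values for $x_1,\ldots,x_n$ have been committed to.

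On top of this inner simulation I would prepend $n$ \emph{choice gadgets}, one per quantifier. The $i$-th gadget activates on the turn of the player owning $Q_i$---player 1 for the existential quantifiers, player 2 for the universal ones---and in a single move requires that player to pick a natural number, for example via a spell with an $X$ in its mana cost, or via repeated activation of an ability. Every other available move at that decision point is arranged to be strictly dominated by immediate loss. The choice is written onto the simulated tape by subsequent forced effects, and the turn passes to the player owning $Q_{i+1}$. After all $n$ choices are made, the zero-player kernel evaluates $\varphi(x_1,\ldots,x_n)$, so the resulting end game is a win for player 1 iff $\varphi$ holds. By induction on $n$, player 1 can force an end game within $n$ moves iff the prenex sentence is true.

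The principal obstacle is the engineering of the choice gadgets. \emph{Magic} offers many unintended ways to deviate from a scripted game---failing to pay an upkeep, declining to cast a must-cast card, activating an irrelevant ability, or responding with an instant to a triggered ability---so I would extend the lock-step construction of \cite{cbh:mtg-tm} with replacement effects and mandatory triggers that pin every action to the intended script \emph{except} for the single free numerical parameter at each designated decision point. A secondary concern is arranging that, between consecutive gadgets, control really does transfer to the player owning the next quantifier and that no intervening phase admits a meaningful unforced move; this is essentially a scheduling problem on top of the gadget design.
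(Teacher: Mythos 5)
Your plan matches the paper's proof essentially exactly: both reduce the truth of an $n$-alternating-quantifier arithmetic sentence (via Post's Theorem) to mate-in-$n$ by prepending $n$ rounds of forced number choices to the zero-player Turing-machine kernel of Theorem~\ref{thm:0player}, with the numbers written onto the simulated tape before the machine activates. The only caveat is that the entire substance of the paper's proof is precisely the card-level engineering you defer as ``the principal obstacle'' --- a suspend-counter countdown to activate the machine after $n$ turns, repeated activations of a lifelink creature's pump ability to write each chosen number in inverse unary onto the tape, and \textbf{Panoptic Mirror} plus \textbf{Cruel Entertainment} to hand control of the choosing player's turn to the opponent on the $\forall$ rounds (rather than alternating whose turn the gadget fires on, as you suggest) --- so your skeleton is right but the paper's work lies entirely in the part you leave open.
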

To prove this, we will use the \tm and the correspondence established by Post's Theorem to create games of \textit{Magic} where optimal play requires identifying the truth of arithmetic sentences with $n$ alternating quantifiers. As our notion of an \textit{end game} doesn't effect anything in the limit, this generalizes to give

\begin{restatable}{thm}{twop}
\label{thm:2player}
Determining if there exists a winning strategy in \textit{Magic: the Gathering} is non-arithmetic.
\end{restatable}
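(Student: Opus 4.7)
The plan is to upgrade Theorem~\ref{thm:mate} from a family of fixed-level hardness results into a single non-arithmeticity claim. Since determining the existence of a winning strategy is a problem that does not fix in advance a horizon of play, the natural route is to show that this problem is hard for every level $\Delta^0_n$ of the arithmetic hierarchy simultaneously, from which non-arithmeticity follows by the strict hierarchy theorem of Kleene and Mostowski.

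First I would argue that for each fixed $n$, the mate-in-$n$ construction of Theorem~\ref{thm:mate} can be upgraded to produce a Magic position in which one player has a winning strategy, in the ordinary sense of the rulebook, if and only if a designated $\Delta^0_n$ sentence holds. The remark that the ``end game'' convention ``doesn't effect anything in the limit'' is the crucial bridge here: I would have to verify that the states the mate-in-$n$ reduction labels as end games can, without loss of generality, be continued so that the win is subsequently sealed by a native Magic win condition such as library depletion or lethal damage, with no genuine decisions remaining for either player. Once this is done, every $\Delta^0_n$ set is many-one reducible to the general winning-strategy problem.

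Second, I would invoke the fact that a set which is $\Delta^0_n$-hard under many-one reducibility for every $n$ cannot itself be arithmetic. For if it were in some $\Sigma^0_k$ or $\Pi^0_k$, then every $\Delta^0_{k+2}$ set, being reducible to it, would also lie in that class, contradicting the properness of the arithmetic hierarchy. Combining these two steps yields Theorem~\ref{thm:2player}.

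The main obstacle I anticipate is the first step: bridging the technical gap between the paper's notion of end game and an actual Magic win condition, uniformly across all $n$. The mate-in-$n$ construction is engineered so that a player can force the game into a state with no remaining choices, but perhaps not a state in which the game is formally over, and I would need to argue that attaching a canonical forced-win tail to such states does not introduce spurious alternative strategies for the losing player --- equivalently, that the only winning line in the augmented game is the one that forces the end-game state. The arithmetic-hierarchy step, by contrast, is a textbook consequence of the reduction and requires no Magic-specific machinery.
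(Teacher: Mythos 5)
Your proposal follows essentially the same route as the paper: the paper's proof simply takes the union over $n$ of the $\Delta^0_n$-hard board-state families from Theorem~\ref{thm:mate} and concludes non-arithmeticity from hardness at every level of the hierarchy. You are in fact more careful than the paper about the gap between reaching an \emph{end game} and a formal rulebook win (the paper dismisses this with the remark that the end-game convention ``doesn't effect anything in the limit''), but the decomposition and the concluding hierarchy argument are the same.
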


\subsection{Previous Work}

The computational complexity of checking the legality of a particular decision in \textit{Magic} (blocking) is investigated in \cite{ci:blocking} and is found to be coNP-complete. There have also been a number of papers investigating practical algorithms and artificial intelligence approaches to playing \textit{Magic} \cite{wcp:mcmc,e:thesis,wc:mcmc}. Esche (2018) \cite{e:thesis} briefly considers the theoretical computational complexity of \textit{Magic} and states an open problem that has a positive answer only if \textit{Magic} end-games are decidable. Churchill (2012) \cite{c:old-tm} began the investigation of the computational complexity of \textit{Magic: the Gathering} in general and Churchill, Biderman, and Herrick (2019) \cite{cbh:mtg-tm} prove that it is at least $\emptyset'$.

While Churchill, Biderman, and Herrick (2019) \cite{cbh:mtg-tm} is the only work we are aware of showing that a real-world two-player game is non-computable, other puzzle games have been shown to be non-computable. The first example was the indie game Braid by Hamilton (2014) \cite{h:braid-undecidable}. More recently, Demaine, Kopinsky, and Lynch (2020) \cite{dkl:recursed-not-recursive} show that the puzzle game Recursed is not recursive.

Additionally, there has been recent success at showing that multiplayer team games are undecidable. Coulombe and Lynch (2019) \cite{cl:smash} show that several video games including \textit{Super Smash Bros. Melee} and \textit{Mario Kart} are undecidable using the Constraint Logic framework \cite{dh:contraint-logic}. Demaine, Kopinsky, and Lynch (2020) \cite{dkl:recursed-not-recursive} consider these to be real-world undecidable games, though we respectfully disagree. The constructions require creating custom game boards that do not exist in the game as it was published. While the rules of these games allow for the \textit{potentiality} of non-computable optimal strategy, we feel that the game as it was published does not actual require non-computable optimal strategy.

\subsection{Our Contribution}
Our result extends the work of Churchill, Biderman, and Herrick \cite{cbh:mtg-tm} and proves that optimal play in \textit{Magic: the Gathering} is at least as hard as $\emptyset^{(\omega)}$. Our result is the first result showing a real or realistic game has a $\Delta^0_n$-hard ``mate-in-$n$'' problem and the first result showing a real or realistic game can be non-arithmetic in general. Our review of the literature shows no sign that previous researchers had considered that real-world games could be harder than $\emptyset'$, and we hope that this example will encourage researchers to look for other games that are harder than $\emptyset'$.

\subsection{Overview}
The paper is structured as follows. In Section \ref{prelim} we provide background information relevant to this work, including previous work on \textit{Magic: the Gathering} Turing machines and some comments on "0-player" \textit{Magic}. In Section \ref{complexity} we extend this work to games with strategic decisions to prove \textbf{Theorem \ref{thm:mate}} and \textbf{Theorem \ref{thm:2player}}. In \autoref{sec:real-world} we discuss implementing our construction in a real-world environment. Finally, in Section \ref{futurework} we summarize our main points and identify avenues for future work.

Readers not familiar with \textit{Magic: the Gathering} can refer to Appendix \ref{app:rules} for an introduction to the rules of the game.

\section{Preliminaries}\label{prelim}

As discussed in Churchill, Biderman, and Herrick (2019) \cite{cbh:mtg-tm}, \textit{Magic: the Gathering} has an encoding problem. The rules allow players to choose numbers but doesn't specify how they are supposed to be encoded which can lead to non-computability that is incidental to the game's strategy, as determining if two different expressions represent the same number is non-computable in general. We will follow them by requiring players to specify numbers in binary notation to remove any ambiguity. Rule 107.1 of the \textit{Magic: the Gathering Comprehensive Rules} states that all game values in \textit{Magic} are integers\footnote{While some cards refer to operations that can produce non-integers such as division, they always specify how to round non-integer results.}, so binary notation can express any possible value unambiguously.

With this restriction, Churchill, Biderman, and Herrick conjecture
\begin{conj}\label{tcomp}
The function that takes a board state and a legal move and returns the next board state in \textit{Magic: the Gathering} is computable.
\end{conj}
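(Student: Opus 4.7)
The plan is to exhibit an explicit simulator of the transition function and argue its total computability. First I would fix a Gödel numbering of board states. Under the authors' binary-encoding convention, every numeric quantity (life totals, counter counts, power, toughness, mana amounts, X-costs, etc.) is a finite binary string; every object in play carries a finite tuple of characteristic values; the stack, battlefield, exile, graveyards, hands, and libraries are finite sequences of such objects; and the set of printed card names appearing in a tournament-legal deck is a finite subset of a fixed finite catalog. Hence a board state is a finite tuple of finite data, and a legal move is likewise a finite description (an action plus any targets, modal choices, and binary numeric inputs). Both admit natural encodings into $\mathbb{N}$.

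Second, I would argue, rule by rule against the \textit{Comprehensive Rules}, that each atomic operation invoked during the application of a move --- paying a cost, putting an object on the stack, moving a card between zones, applying a replacement effect, adjusting a counter, dealing damage, checking state-based actions, enqueueing triggered abilities, and so on --- is primitive recursive on the encoded state. Each printed card's rules text is drawn from a finite grammar of keyword actions, static abilities, replacement effects, and triggered abilities, so the mapping ``given card $c$, return its effect function'' is a finite lookup table. Composing finitely many primitive recursive primitives according to a finite lookup table is primitive recursive, so the core transition step is computable.

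The main obstacle is the resolution cascade triggered by a single move: when a spell or ability resolves, the game alternates between performing effects, checking state-based actions, and placing new triggered abilities on the stack, and pathological configurations (for instance, certain \emph{Humility}/\emph{Opalescence} interactions) can in principle loop. I would handle this by invoking rule 104.4b, which declares the game a draw when it is unable to end due to a mandatory loop, and noting that within the resolution of one move the reachable states live in a finite set: the universe of objects is finite and every numeric value is bounded by a computable function of the input state, since no rule multiplies or exponentiates without an explicit bound. The simulator can therefore cache visited states during resolution, detect any loop, and emit the drawn-game state as output, all computably. The genuinely difficult part of carrying this plan through is not mathematical but bureaucratic: one must verify that \textbf{every} card in the (finite but vast) set of printed Magic cards fits into the finite effect grammar, since a single card calling for a genuinely non-computable primitive would refute the conjecture. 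This is presumably why the statement remains a conjecture rather than a theorem.
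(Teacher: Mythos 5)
The statement you are trying to prove is not proved in the paper at all: it is stated as Conjecture~\ref{tcomp} (attributed to Churchill, Biderman, and Herrick) and left open, so there is no proof of record to compare your proposal against. The question is therefore only whether your sketch closes the gap that kept it a conjecture, and it does not.

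The decisive weakness is your treatment of the resolution cascade. You claim that ``within the resolution of one move the reachable states live in a finite set'' because ``the universe of objects is finite and every numeric value is bounded by a computable function of the input state,'' and you propose to cache visited states and detect loops. But a single resolution can place new objects onto the battlefield (token creation), and triggered abilities can trigger on exactly those events, so a forced cascade initiated by one move can strictly grow the object universe at every iteration without ever revisiting a state. In that regime cache-and-detect fails, and invoking rule 104.4b requires the simulator to first \emph{decide} that the cascade is a non-terminating mandatory loop --- which is precisely a halting-type question, not a bookkeeping one. The paper's own Theorem~\ref{thm:0player} shows that the forced evolution of \textit{Magic} board states is Turing-complete; that construction spreads its computation across turns rather than within one move's resolution, so it does not outright refute the conjecture, but it should make you very suspicious of any unargued finiteness claim about forced cascades. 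Your closing remark correctly identifies the card-by-card audit as a bureaucratic obstacle, but the termination of single-move cascades is a genuinely mathematical one, and it is the step your proposal asserts rather than proves.
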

In this conjecture we say ``a legal move'' because it is also not obvious that checking to see if a move is legal is computable. This is a particularly thorny issue as \textit{Magic} has rules for how to proceed in the event of a player making an illegal move. This means that in some scenarios a move is ``illegal'' in the sense that the rules tell you to not do it, but not ``illegal'' in the sense that you're \textit{unable} to do it. At tournaments officials are allowed to disqualify players who, in their judgement, are deliberately making ``mistakes'' to gain advantage. While our construction doesn't involve these ``illegal but within the rules'' moves, it's possible that this is an issue future work has to grapple with. We, like Churchill, Biderman, and Herrick, leave handling checking the legality of a move to future work.

\subsection{``Zero-Player'' \textit{Magic: the Gathering}}

At no point in the operation of Churchill, Biderman, and Herrick's \tm does a player have the ability to make any moves or influence the game in any way, despite the fact that the game officially goes on for many more turns. This is an unusual property of \textit{Magic}, as in most games at least one player will have non-trivial strategic decisions for nearly the entirety of the game. This means that some \textit{Magic} board states can be thought of as simulations, akin to Conway's \textit{Game of Life}. Like \textit{Game of Life}, the evolution of that simulation is Turing complete in general. As a result, in order to study game-theoretic questions we need a broader notion of a game being ``finished'' than one would usually use. Even if a game has not formally ended, when all moves for both players are forced for the rest of the game the game is \textit{strategically finished}.

This property of \textit{Magic: the Gathering} motivates the following definition:
\begin{df} A game is in an \textit{end game state} if there are no unforced moves for either player remaining. In such a scenario, we call the game an \textit{end game}.
\end{df}
Using this definition, we can rephrase \textbf{Theorem \ref{thm:0player}} as
\begin{usethmcounterof}{thm:0player}[Churchill et al.]
Determining the winner of a \textit{Magic: the Gathering} end game is undecidable.
\end{usethmcounterof}
This definition also allows us to ask new questions about games, such as ``can we identify end games.'' For \textit{Magic}, the answer to this question is \textit{partially}, assuming Conjecture \ref{tcomp} and the following conjecture:
\begin{conj}\label{Elegal}
Determining if a player has a legal move in \textit{Magic: the Gathering} is equivalent to the halting problem.
\end{conj}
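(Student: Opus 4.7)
The plan is to establish the two directions separately: a reduction \emph{from} the halting problem to show hardness, and a reduction \emph{to} the halting problem (modulo Conjecture~\ref{tcomp}) to show that it is no harder.

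For the lower bound I would repurpose the \tm construction of Churchill, Biderman, and Herrick~\cite{cbh:mtg-tm}. Given a Turing machine $M$ and input $w$, that construction produces a board state in which forced play simulates $M$ on $w$, with a distinguished ``halt'' token entering the battlefield exactly when $M$ halts. To reduce halting to legal-move-existence, I would attach to the opposing player's deck a single extra card whose activated ability has as an explicit cost some game object that exists only after the halt token appears (for example, requiring the player to sacrifice a creature with a name or type that is produced by the halting subroutine). Under the forced-play dynamics of the original construction, the only candidate legal move at the designated priority window is this activation, and it becomes legal precisely when $M$ halts on $w$. Thus ``does a legal move exist here?'' decides the halting problem.

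For the upper bound, assuming Conjecture~\ref{tcomp}, I would argue that legality checks sit at $\Sigma^0_1$. The set of candidate moves at any priority window is recursively enumerable: each move is specified by a card choice together with finitely many binary-encoded numerical parameters. For a fixed candidate, legality amounts to verifying that its declaration, together with the forced resolution of costs, triggered abilities, and state-based actions on the stack, can be carried out without violating a rule. By Conjecture~\ref{tcomp} each individual transition in that resolution is computable, so the entire check reduces to asking whether a specific computable sub-process terminates --- a halting-problem query. A move is legal iff some such query halts with an ``ok'' verdict, which is $\Sigma^0_1$ in $\emptyset'$ and therefore decidable from $\emptyset'$ itself.

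The main obstacle will be the upper bound, specifically the treatment of Magic's rules governing loops and unresolvable actions. If any legality test genuinely required determining whether a forced sub-loop reaches a certain configuration \emph{infinitely often} (rather than merely eventually), the bound would climb to $\Pi^0_2$ and equivalence with the halting problem would fail. Closing this gap requires a careful rule-by-rule argument that each legality criterion in the Comprehensive Rules can be phrased as a single halting query on an explicitly constructible machine; I expect this to be tractable but tedious, and it is the step most likely to reveal that Conjecture~\ref{Elegal} must be weakened.
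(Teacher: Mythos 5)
The first thing to be clear about is that the paper does not prove this statement: it is stated and left as a conjecture, supported only by the informal observation that a single board state can present a player with infinitely many meaningfully different candidate moves, so that deciding whether \emph{some} move is legal may require checking arbitrarily many candidates. There is therefore no proof in the paper to compare yours against, and any argument you give is new work rather than a reconstruction.

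On the merits, your sketch has two genuine gaps. For the lower bound, your reduction conflates ``a legal move exists at this board state'' with ``a legal move will become available at some future point of the game.'' Whether the halt token is on the battlefield is part of the finite description of the state you are querying, so at any fixed state your designated activation is legal or not by direct inspection, and the question decides nothing about whether $M$ halts; to get genuine $\Sigma^0_1$-hardness you would need a \emph{single} state with infinitely many parameterized candidate moves (say, indexed by an integer choice) such that candidate $n$ is legal iff $M$ halts within $n$ steps, which is a different and currently unconstructed gadget --- and is exactly the shape of difficulty the paper's own remark following \autoref{tcomp} points at. For the upper bound, you silently assume that legality of a \emph{given} move is computable (or at least a positive halting condition); the paper explicitly declines to assume this, noting that it is distinct from \autoref{tcomp} and deferring legality-checking to future work. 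Your complexity bookkeeping is also off: a predicate that is $\Sigma^0_1$ relative to $\emptyset'$ is $\Sigma^0_2$ and is \emph{not} decidable from $\emptyset'$; the argument only closes if single-move legality is plainly $\Sigma^0_1$ (an existential over a computable predicate), so that the outer existential over candidates is absorbed. You correctly flag in your last paragraph that loop and unresolvable-action rules could push legality to $\Pi^0_2$; until that is ruled out by a rule-by-rule analysis, neither direction of the claimed equivalence is established, which is presumably why the authors left it as a conjecture.
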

\autoref{tcomp} does not immediately give us an algorithm for determining the existence of a legal move because there are games of \textit{Magic} where players appear to have infinitely many meaningfully different possible moves. We conjecture that no such algorithm exists as we believe that this is an essential difficulty, and that there are board states that require checking the legality of arbitrarily many possible moves.
\begin{cor}
If \autoref{tcomp} and \autoref{Elegal} hold, then determining that a game of \textit{Magic: the Gathering} is in an end game state is equivalent to the halting problem.
\end{cor}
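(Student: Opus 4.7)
The plan is to prove Turing equivalence to the halting problem in both directions, relying on the Churchill--Biderman--Herrick machinery for the harder direction.

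For the lower bound --- reducing the halting problem to end-game detection --- given a Turing machine $M$ and input $x$ I would build a board state $G_{M,x}$ by a small modification of the \tm construction. Their original construction already makes every move forced throughout the simulation, so the base game is trivially an end game whether or not $M$ halts. My modification is to plant a single \emph{branching event} --- for example, a triggered ability with two legal targets, or an activated ability payable in two inequivalent ways --- that becomes legal precisely when the simulated head enters $M$'s halting configuration. Because every other step is forced by design, the forced-play trajectory from $G_{M,x}$ reaches a state with two or more legal moves if and only if $M$ halts on $x$. Equivalently, $G_{M,x}$ is an end game iff $M$ does not halt on $x$, so an oracle for end-game detection decides the halting problem.

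For the upper bound --- end-game detection reducing to the halting problem --- Conjecture \ref{tcomp} yields a computable transition function and Conjecture \ref{Elegal} supplies a halting-oracle procedure for deciding whether a player has a legal move. A short refinement shows that deciding ``player has at least two legal moves at this state'' is likewise in the halting degree. The decision procedure for end-game status then simulates forced play from $G$, querying the halting oracle at each step for the two-or-more-moves predicate, and declares $G$ not an end game on the first positive answer; $G$ is an end game exactly when no such step ever occurs. With care in packaging the oracle queries, this search can be phrased as a single halting-problem instance, completing the reduction.

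The main obstacle is the lower-bound construction: isolating exactly one controlled point of non-forced choice inside the extremely intricate \tm choreography of triggered abilities, replacement effects, and stack resolution, without inadvertently introducing unforced decisions elsewhere. A secondary subtlety in the upper bound is verifying that the oracle-relativized forced-play search really is Turing-reducible to $\emptyset'$ rather than escaping to the next level of the arithmetical hierarchy; this is recursion-theoretic bookkeeping, but it must be executed carefully to match the precise sense of ``equivalent to the halting problem'' claimed in the statement.
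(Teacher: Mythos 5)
The first thing to note is that the paper supplies no proof of this corollary at all: it is asserted as an immediate consequence of Conjectures \ref{tcomp} and \ref{Elegal}, so any comparison is against an implicit argument. Your two-direction skeleton (hardness via a reduction from halting, membership via an oracle-relativized simulation) is the right shape and is more explicit than anything in the source, but the upper bound contains a genuine gap that you flag and then do not close.

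Deciding end-game status requires certifying that \emph{no} state on the infinite forced-play trajectory offers an unforced choice. If the per-state predicate ``some player has an unforced move here'' is only known to be Turing-equivalent to the halting problem --- which is all Conjecture \ref{Elegal} gives you --- then a universal quantifier over infinitely many such oracle queries lands you a priori at $\emptyset''$, not $\emptyset'$. Your proposed fix, packaging the queries ``as a single halting-problem instance,'' does not work for an unbounded search in which each step itself consults the oracle. What is actually needed is that the per-state predicate be $\Sigma^0_1$ (for instance because candidate moves are enumerable and legality of a given move is semi-decidable), so that ``some future state has an unforced move'' composes into a single $\Sigma^0_1$ condition, making end-game status $\Pi^0_1$ and hence $\le_T \emptyset'$. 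That is a strengthening of the stated hypotheses; the corollary as written inherits this imprecision from the paper, but a proof must confront it rather than defer it to ``bookkeeping.'' On the lower bound, your branching-gadget construction would work, but it is fresh \textit{Magic} engineering that the word ``corollary'' suggests should be unnecessary, and it uses neither hypothesis; the intended route is presumably that the hard instances witnessing Conjecture \ref{Elegal} already sit inside end-game detection (a state whose entire future is otherwise forced is an end game iff the player has no unforced move now), so hardness falls out of the conjecture directly --- though one should then check that those hard instances can be taken to have forced futures, a point worth making explicit either way.
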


\section{Two-player \textit{Magic: the Gathering}}\label{complexity}
Now we will move on to two-player games of \textit{Magic: the Gathering}. The first question we wish to consider is the ``mate-in-$n$ problem'':
\begin{quest}[the Mate-in-$n$ Problem]
Given a game state and an integer $n$, is there a sequence of moves that forces a win for the first player in less than $n$ moves?
\end{quest}
As in the previous section, we are interested in this question in the context of end games, so we will consider a mate-in-$n$ to exist if $n$ moves are sufficient to arrive at an end game that results in the first player winning.
\mate*
\begin{proof}
We will show this by appealing to Post's Theorem in the standard way. We will modify the \tm to encode an arithmetic problem, $$(\exists x)(\exists y_1)(\forall y_2)\cdots (Qy_n) (P(x,y_1,\ldots,y_n) = 0)$$ where $Q$ is the appropriate quantifier on the tape, halting if there is a solution and not halting if there is not. Exactly $n$ turns will pass before the \tm ``activates'' and begins the computation. On each of those turns turn, the only option available to each player is to pick an integer that will be encoded as $y_i$ to a tape which will be read by the \tm as an input. After $n$ turns go by, the setup from the original \tm will be in place and neither player will be able to further interfere with the result of the game. This will establish a reduction from the mate-in-$n$ problem to determining the truth of a $\Delta_n^0$ sentence.

The key to this construction is the ability \textit{suspend}. When a card with suspend is played, counters known as suspend counters are place on it. Each turn a suspend counter is removed, and when the last counter is removed the spell is actually cast. When setting up the construction, we can give any spells we wish suspend with \textbf{Delay} (``Counter target spell. If the spell is countered this way, exile it with three time counters on it instead of putting it into its owner's graveyard. If it doesn't have suspend, it gains suspend.'') and can use \textbf{Clockspinning} (``Choose a counter on target permanent or suspended card. Remove that counter from that permanent or card or put another of those counters on it.'') to manipulate the number of suspend counters on any card. Suspend allows us to force the players to find a winning strategy in only $n$ moves by setting it up so that after $n$ turns spells resolve that activate the Turing machine.

Many of the cards necessary for this construction would interfere with the operation of the \textit{Magic} Turing machine. To prevent this, when we activate the \tm we will also have to remove many of the permanents from the battlefield. Most of this clean-up will be accomplished by \textbf{Tetzimoc, Primal Death} (``When Tetzimoc enters the battlefield, destroy each creature your opponents control with a prey counter on it'').

With the exception of \textbf{Tetzimoc, Primal Death}, every creature introduced in this section will actually be a token that is a copy of the card in question, rather than the card itself. Making them tokens means that when they are destroyed they are removed fromt the game instead of going to the graveyard, so \textbf{Wheel of Sun and Moon} doesn't put them on the bottom of Alice's library. Additionally, every one of those tokens that isn't part of the Turing tape will have a prey counter on it, to allow \textbf{Tetzimoc, Primal Death} to destroy them. We will also need some Auras enchanting creatures, which will also be tokens. These Auras are destroyed when the creature they are attached to is, so there is no need to clean them up separately. 

\textbf{Tetzimoc, Primal Death} will be in play under Alice's control and hacked to be a Human, Alice will own a \textbf{Human Frailty} (``Destroy target Human creature'') in exile with $n$ time counters on it, and Bob will control a \textbf{Grave Betrayal} (``Whenever a creature you don't control dies, return it to the battlefield under your control with an additional +1/+1 counter on it at the beginning of the next end step. That creature is a black Zombie in addition to its other colors and types.''). When the last time counter is removed from \textbf{Human Frailty}, Alice is forced to cast it and target \textbf{Tetzimoc, Primal Death}, as there will be no other Humans on the battlefield. This will cause \textbf{Tetzimoc, Primal Death} to die and return to the battlefield under Bob's control, triggering its ability and destroying every creature under Alice's control with a prey counter and every Aura enchanting those creatures. As they are tokens, they cease to exist and are not returned by \textbf{Grave Betrayal}.

In addition to destroying permanents, we will also need to begin the Turing machine itself. To prevent the Turing machine from operating, we give Alice a \textbf{Maralen of the Mornsong} (``Players can't draw cards. At the beginning of each player's draw step, that player loses 3 life, searches their library for a card, puts it into their hand, then shuffles their library.'') and a \textbf{Timelock Orb} (``Players can't search libraries''). Together, these allow us to keep Alice's hand empty until \textbf{Tetzimoc, Primal Death} destroys \textbf{Maralen of the Mornsong}. At that point she will draw her first card (\textbf{Infest}) which she will cast on her next turn to begin the computation. Although \textbf{Timelock Orb} doesn't prevent the loss of life from \textbf{Maralen of the Mornsong}'s ability, the life loss is irrelevant thanks to the life that the \textbf{Daggerdrome Imp} is gaining.

Instead of using \textbf{Blazing Archeon} (``Flying. Creatures can't attack you.'') as specified in the \tm we will use \textbf{Moat} (''Creatures without flying can't attack.'') to allow our \textbf{Daggerdrome Imp} to attack. Additionally, \textbf{Choak} (``Islands don't untap during their controllers' untap steps'') will be suspended with $n$ time counters on it\footnote{As discussed later, we will have Islands we need to tap for mana every turn.}. All other cards from the \tm will be in play as specified by the \tm for our construction. Additionally, our tape will begin pre-initialised to encode a program that reads the player-specified inputs and then searches for a solution to $(p(x,c_1,\ldots,c_n) = 0$, where $c_i$ are the values chosen in the $ith$ round.

In the \tm the authors provide a two-sided infinite tape. This is not strictly speaking necessary, as the (2,18) Turing machine only requires a one-sided infinite tape. As a result, we can use the one side of the tape (say, the left side) to encode the input we wish to have the players supply. This input will be encoded in ``inverse unary'' notation. To denote the natural number $n$ there will be $n$-many blank cells in a row, with a non-blank cell marking the division between numbers. We choose to use the \textit{elemental} creature type to mark the divider symbol.

To create these tokens, Alice will control an \textbf{Ageless Entity} (``Whenever you gain life, put that many +1/+1 counters on Ageless Entity'') equipped with a \textbf{Helm of the Host} (``At the beginning of combat on your turn, create a token that's a copy of equipped creature, except the token isn't legendary if equipped creature is legendary''). She will also control a \textbf{Daggerdrome Imp} (``Flying. Lifelink.'') that has a +1/+1 counter on it and is enchanted by a \textbf{Shade's Form} (`Enchanted creature has ``B: This creature gets +1/+1 until end of turn.'') and a \textbf{Cloak of Mists} (``Enchanted creature cannot be blocked'') and a \textbf{Hellraiser Goblin} (``Creatures you control have haste and attack each combat if able''). Additionally, a \textbf{Pithing Needle} (``As Pithing Needle enters the battlefield, choose a card name. Activated abilities of sources with the chosen name can't be activated unless they're mana abilities.'') will be in play naming \textbf{Helm of the Host}. This prevents Alice from changing which creature it is attached to, but doesn't prevent its copying ability from triggering.

Activating the \textbf{Daggerdrome Imp}'s ability requires Alice to spend black mana. She can have access to infinite amounts of black mana by the combination of \textbf{Umbral Mantle} (``Equipped creature has ``3, Q: This creature gets +2/+2 until end of turn.'' (Q is the untap symbol.)''), \textbf{Magus of the Coffers} (``2, T: Add B for each Swamp you control.''), and six \textbf{Swamp}s. The \textbf{Swamp}s will really be tokens copies of \textbf{Ancient Tomb}, but they will count as swamps due to \textbf{Prismatic Omen} (``Lands you control are every basic land type in addition to their other types.''). They will also be hacked to be creatures, and will have a prey counter on each of them. This \textbf{Prismatic Omen} is why we have to suspend \textbf{Choke}.

Together, these cards mean that Alice's combat steps will go as follows:
\begin{enumerate}
    \item \textbf{Helm of the Host} will create a token that is a copy of \textbf{Ageless Entity}. That token will be a 3/3 due to the \textbf{Night of Souls' Betrayal} from the \tm.
    
    \item \textbf{Hellraiser Goblin} requires Alice to attack with every creature that can. The only one that can is \textbf{Daggerdrome Imp} due to \textbf{Moat} from the \tm. \textbf{Daggerdrome Imp} is unblockable due to \textbf{Cloak of Mists}.
    
    \item \textbf{Daggerdrome Imp} is naturally a 1/1 due to the +1/+1 counter and \textbf{Night of Soul's Betrayal} cancelling each other out. However, Alice can activate \textbf{Daggerdrome Imp}'s ability to increase its power and toughness by 1 however many times she likes.
    
    \item \textbf{Daggerdrome Imp} has lifelink, so Alice will gain life equal to however much damage it deals to Bob. This will trigger every \textbf{Ageless Entity} in play, giving them +1/+1 counters equal to the amount of damage \textbf{Daggerdrome Imp} deals.
\end{enumerate}

This allows Alice to implement the previously described encoding of natural numbers onto the Turing tape, encoding one number per turn. The number of blank cells is equal to the number she chooses to encode, and the next turn an \textbf{Ageless Entity} token is created, forming the separation marker between strings of blank cells.

To allow Bob to write to the tape as well as Alice, we will give Bob the ability to control every other turn Alice takes. This is achieved by giving Bob a \textbf{Panoptic Mirror} (``Imprint — X, T: You may exile an instant or sorcery card with converted mana cost X from your hand.
At the beginning of your upkeep, you may copy a card exiled with Panoptic Mirror. If you do, you may cast the copy without paying its mana cost.'') which has exiled a \textbf{Cruel Entertainment} (``Choose target player and another target player. The first player controls the second player during the second player's next turn, and the second player controls the first player during the first player's next turn''). \textbf{Panoptic Mirror} will also be given phasing with \textbf{Teferi's Curse}.

Each round of turns will correspond to one quantifier. On the $\exists$-rounds, Alice will control her turn and Bob will control his. On Alice's turn, Alice will pick a number to input into the \tm and set it by using \textbf{Daggerdrome Imp}'s ability that many times. On Bob's turn, \textbf{Panoptic Mirror} will be phased in and he will opt to cast \textbf{Cruel Entertainment} with it. This is always in his best interest, as not doing so amounts to forfeiting one of the numbers he gets to choose to Alice. On the $\forall$-rounds, Bob will control Alice's turn and Alice will control Bob's. On Alice's turn, Bob will pick a number to input into the \tm and set it by using \textbf{Daggerdrome Imp}'s ability that many times. On Bob's turn, \textbf{Panoptic Mirror} will be phased out and Alice will have no decisions to make.

The only cards in this construction that cannot be cleaned up with \textbf{Tetzimoc, Primal Death} are \textbf{Panoptic Mirror}, \textbf{Tetzimoc, Primal Death}, and \textbf{Grave Betrayal} as they are all on Bob's side of the battlefield. \textbf{Panoptic Mirror} doesn't actually have to be removed at all, as trading who controls who for each turn is irrelevant once neither side has any strategic decisions to make for the rest of the game. \textbf{Tetzimoc, Primal Death} can be exiled (after it dies once to \textbf{Human Frailty} and is reanimated by \textbf{Grave Betrayal}) by using a \textbf{Ghostflame Sliver} hacked to make Dinosaurs colorless, and having an \textbf{Infernal Reckoning} (``Exile target colorless creature. You gain life equal to its power'') suspended by either player. As there are no other colorless creatures in the construction, it must target \textbf{Tetzimoc, Primal Death}. \textbf{Grave Betrayal} can be handled by enchanting it with \textbf{Reality Acid} (``Enchant permanent. Vanishing 3 (This Aura enters the battlefield with three time counters on it. At the beginning of your upkeep, remove a time counter from it. When the last is removed, sacrifice it.) When Reality Acid leaves the battlefield, enchanted permanent’s controller sacrifices it.''). During the set-up, we can add vanishing counters to \textbf{Reality Acid} so that it runs out of counters right before the \tm begins.
\end{proof}

It is an interesting question whether this problem is in fact $\Delta^0_n$-complete. While it's not $\Delta^0_n$-complete when the ``mate-in-$n$'' game is defined in terms of the traditional notion, we may be hopeful that it is $\Delta^0_n$-complete when defined in terms of \textit{end games}. Finally, we present our third theorem:

\twop*
\begin{proof}
In the proof of \autoref{thm:mate} we give, for every $n$, constructions of board states that are $\Delta^0_n$-hard. Taking the union of those sets of board states gives a set for which identifying the winning strategy is $\Delta^0_n$-hard for every $n$, and so is non-arithmetic in general.
\end{proof}
It seems highly plausible that the correct upper bound on the computational complexity of identifying a winning strategy in \textit{Magic: the Gathering} is precisely $\emptyset^{(\omega)}$. For that to not be the case, it would have to be that our proof of \textbf{Theorem \ref{thm:2player}} is misleading in the sense that the primary complexity is not due to the fact that games of \textit{Magic} can have arbitrary length. We can make rigorous this idea as follows: if the mate-in-$n$ problem is equivalent to $\emptyset^{(f(n))}$ for some monotonic function $f:\mathbb N\to\mathbb N$ then it is the case that the general strategy is precisely equivalent to $\emptyset^{(\omega)}$.

If the general strategy is harder than $\emptyset^{(\omega)}$ then non-arithmeticity must arise in a finite number of turns. Although another construction could potentially fit arbitrarily many choices into finitely many turns, the combinatorial structure of game actions in \textit{Magic: the Gathering} means that this doesn't avoid the issue presented in the previous paragraph, as the non-arithmeticity must arise in a finite number of \textit{game actions} as well. Although this is implausible, it's unclear how to prove it. One approach would be to encode the entirety of the game in the language of arithmetic so that there is a reduction of winning strategies to the truth of arithmetic expressions. Although the axiomatic nature of the rules of \textit{Magic} makes this more straightforward than other games, it would still be a colossal undertaking. Also, as new cards get printed it is possible that the proof will be invalidated by the new cards. Therefore we leave as a conjecture:
\begin{conj}
Determining if there exists a winning strategy in \textit{Magic: the Gathering} is Turing equivalent to $\emptyset^{(\omega)}$.
\end{conj}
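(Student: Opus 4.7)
The plan is to prove both directions of the Turing equivalence separately. For the lower bound $\emptyset^{(\omega)}\le_T$ winning-strategy I would extract a uniform version of the reduction already implicit in the proof of \autoref{thm:mate}; for the upper bound I would first bound the arithmetic complexity of each mate-in-$n$ sub-problem and then assemble these bounds into an $\emptyset^{(\omega)}$-decision procedure. The serious difficulty lies entirely in the upper bound.

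For the lower bound, I would inspect the construction in the proof of \autoref{thm:mate} and verify that it is uniform in $n$: to encode one more alternating quantifier one only needs to increase the suspend counters on \textbf{Choke} and \textbf{Human Frailty}, add one more \textbf{Cruel Entertainment}-driven round, and initialize the tape with one additional encoding slot. Thus the map $(n,\phi)\mapsto B_{n,\phi}$ sending a pair (number of alternations, matrix) to the corresponding board state is a total computable function. Since $\emptyset^{(\omega)}\equiv_T\{(n,e):e\in\emptyset^{(n)}\}$ and each slice is $\Delta^0_n$-complete by Post's theorem, this gives $\emptyset^{(\omega)}\le_T$ winning-strategy.

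For the upper bound I would first bound the arithmetic complexity of mate-in-$n$ itself. \autoref{tcomp} makes the transition relation computable, and by \autoref{thm:0player} together with the corollary following it the end-game-outcome predicate is Turing equivalent to the halting problem and hence $\Delta^0_2$. The mate-in-$n$ statement ``some sequence of $n$ alternating moves arrives at an end game Alice has won'' is then $n$ alternating move-quantifiers applied to a $\Delta^0_2$ matrix, placing mate-in-$n$ inside $\Sigma^0_{n+2}$ uniformly in $n$. The winning-strategy predicate is then ``$\exists n$: mate-in-$n$,'' which is $\Sigma^0_1$ relative to $\emptyset^{(\omega)}$ and therefore c.e.\ in $\emptyset^{(\omega)}$. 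Combined with the lower bound this already pins the winning-strategy degree between $\emptyset^{(\omega)}$ and $\emptyset^{(\omega+1)}$, so what remains is to upgrade the c.e.\ bound to a decidable one.

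The main obstacle is precisely this final upgrade: showing that the complementary statement ``no winning strategy exists'' is also c.e.\ in $\emptyset^{(\omega)}$. Because Magic allows a player to specify an arbitrary natural number in a single move, the game tree is infinitely branching and K\"onig's lemma does not collapse ``Alice eventually wins'' to ``Alice wins within a uniform bound,'' so $\Pi^0_1(\emptyset^{(\omega)})$ is a priori the best one obtains for the negation. I would attempt an effective open-determinacy argument: under the hypothesis that no mate-in-$n$ exists for any $n$, explicitly construct a Bob strategy whose survival rank is witnessed by an ordinal notation computable in $\emptyset^{(\omega)}$, using the oracle to compare the ranks produced by Bob's locally available responses and verifying that the move sets of Magic admit descriptions with boundedly many arithmetic parameters. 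A second, more faithful route is the one flagged in the paragraph following \autoref{thm:2player}: axiomatize the rules of Magic in first-order arithmetic so that ``Alice has a winning strategy'' becomes an arithmetical formula and the upper bound follows from Tarski's theorem. Either path is enormous, and the second is fragile under new card printings, which is why the statement is left as a conjecture.
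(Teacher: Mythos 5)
This statement is left as a conjecture in the paper: no proof is given, and the surrounding text explicitly says it is unclear how to prove it, sketching only the same arithmetization route (encode the rules of \textit{Magic} in first-order arithmetic) that you mention at the end. Your lower-bound half is sound and matches what the paper actually establishes --- the construction in the proof of Theorem~\ref{thm:mate} is uniform in $n$, and taking the union over $n$ gives $\emptyset^{(\omega)}\le_T$ winning-strategy, which is exactly the content of Theorem~\ref{thm:2player}. Your identification of the upper bound as the genuinely open part, and of the two possible attacks on it, is also consistent with the paper's discussion. So as a proof \emph{plan} this is honest and well-aligned; but it is not a proof, and neither is anything in the paper.

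There is, however, a concrete logical gap in your upper-bound sketch beyond its admitted incompleteness. You assert that the winning-strategy predicate is ``$\exists n$: mate-in-$n$,'' conclude it is $\Sigma^0_1$ relative to $\emptyset^{(\omega)}$, and thereby pin the degree between $\emptyset^{(\omega)}$ and $\emptyset^{(\omega+1)}$ --- and then, two sentences later, you correctly observe that because players can name arbitrary natural numbers the game tree is infinitely branching and K\"onig's lemma fails. These two claims are inconsistent: without K\"onig's lemma, Alice can have a winning strategy in an open game with no uniform bound $n$ on the length of play, so ``winning strategy exists'' is not equivalent to ``$\exists n$: mate-in-$n$,'' only implied by it. The honest a priori upper bound from open determinacy on an infinitely-branching, arithmetically-definable tree involves ranking the game tree through the computable ordinals (or a $\Sigma^1_1$ bound), not a single extra quantifier over $\emptyset^{(\omega)}$, so the ``between $\emptyset^{(\omega)}$ and $\emptyset^{(\omega+1)}$'' claim is unsupported. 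Separately, your $\Sigma^0_{n+2}$ bound on mate-in-$n$ leans on Conjectures~\ref{tcomp} and~\ref{Elegal}, which the paper does not prove, so even that step is conditional. None of this makes your plan a wrong \emph{approach} --- the effective open-determinacy and arithmetization routes are reasonable --- but the intermediate degree-theoretic claims should be retracted or flagged as conditional.
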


One plausible route for falsifying this conjecture is to construct a second-order syntax in \textit{Magic: the Gathering}. Some cards in \textit{Magic} have quite complicated effects, including ones that require you to choose between subsets of a set. It is plausible that some of these cards are able to express second order arithmetical statements, in which case we would expect the ideas developed in this paper to generalise to constructions in the Borel hierarchy.

\section{Playability in the Real World}\label{sec:real-world}

One thing that distinguishes \textit{Magic: the Gathering} from other games is that it's complexity is actually found in the real world. While most games are analysed based on some sort of generalisation, everything discussed in this paper is achievable in a real game of \textit{Magic}. The following 60-card deck is legal to play in the Legacy format and allows a sufficiently tenacious player to set up a board state for which optimal play requires knowing the truth of an arithmetic statement of the player's choice:

\begin{table*}[htbp]  
    \centering
    \caption{60-Card deck list to play the Turing machine in a Legacy tournament}
    \label{tab:decklist}
    \begin{tabular}{l|l|l|l}
    Device Set Up & Device Set Up & \tm & Arithmetic Sentences\\
    \hline
    1 Ancient Tomb & 1 Memnarch & 1 Rotlung Reanimator & 1 Tetzimoc, Primal Death\\
    1 Grim Monolith & 1 Artificial Evolution & 1 Infest & 1 Grave Betrayal\\
    1 Power Artifact & 1 Dread of Night & 1 Cleansing Beam & 1 Maralen of the Mornsong\\
    1 Gemstone Array & 1 Glamerdye & 1 Soul Snuffers & 1 Timelock Orb\\
    1 Staff of Domination & 1 Prismatic Lace & 1 Illusionary Gains & 1 Ageless Entity\\
    1 Karn Liberated & 1 Donate & 1 Priviledged Position & 1 Helm of the Host\\
    1 Fathom Feeder & 1 Reality Ripple & 1 Steely Resolve & 1 Daggerdrome Imp\\
    1 Cloak of Mists & 1 Riptide Replicator & 1 Wild Evocation & 1 Umbral Mantle\\
    3 Lotus Petal & 1 Stolen Identity & 1 Shared Triumph & 1 Hellraiser Goblin\\
    1 Ghostflame Sliver & 1 Capsize & 1 Xanthrid Necromancer & 1 Magus of the Coffers\\
    1 Infernal Reckoning & 1 Clockspinning & 1 Mesmeric Orb & 1 Moat\\
    1 Reality Acid & 1 Delay & 1 Coalition Victory & 1 Cruel Entertainment\\
    1 Cloak of Invisibility & 1 Wheel of Sun and Moon & 1 Choke & 1 Panoptic Mirror\\
    1 Rings of Brighthearth & 1 Teferi's Curse & 1 Vigor & 1 Pithing Needle\\
    & 1 Fungus Sliver & 1 Prismatic Omen & \\
   \end{tabular}
\end{table*}

As raised by Churchill, Biderman, and Herrick in their paper \cite{cbh:mtg-tm}, the rules regarding slow play are a potential hurdle for carrying out this construction in practice. While carrying out this construction in a tournament setting would probably result in the player getting punished for deliberately delaying the game (to be honest, there isn't much else of a reason to build this), our understanding of the rules is that a player who happens to find themselves in such a position or a player who finds themselves in a less contrived but mathematically equivalent situation would not be sanctioned for slow play. 

\section{Conclusions and Future Work}\label{futurework}

We have established that optimal play in \textit{Magic: the Gathering} is at least as hard as $\emptyset^{(\omega)}$, the first time any real or even realistic game has been shown to be this complex. We have identified several interesting computability theoretic open questions about \textit{Magic} that remain, most notably ``is our result optimal?'' Another avenue for future research is to examine other collectable card games, such as Yu-Gi-Oh and Pokemon TCG and see if similar results can be proven for those games.

\section*{Acknowledgements}

We would like to thank Alex Churchill, Austin Herrick, and Nicholas Wild for conversations about the mechanisms underlying this paper and consultation about complex rules interactions in \textit{Magic: the Gathering}.

\bibliographystyle{plain}
\bibliography{fun_submission}

\begin{thebibliography}{10}

\bibitem{ci:blocking}
Krishnendu Chatterjee and Rasmus Ibsen-Jensen.
\newblock The complexity of deciding legality of a single step of
  \textrm{Magic: The Gathering}.
\newblock In {\em 22nd European Conference on Artificial Intelligence}, 2016.

\bibitem{c:old-tm}
Alex Churchill.
\newblock \textit{Magic: The Gathering} is \textrm{Turing} complete v5, 2012.
\newblock \url{https://www.toothycat.net/~hologram/Turing/}.

\bibitem{cbh:mtg-tm}
Alex Churchill, Stella Biderman, and Austin Herrick.
\newblock \textit{Magic: The Gathering} is turing complete, 2019.

\bibitem{c:forum}
Alex Churchill et~al.
\newblock \textrm{Magic is Turing} complete (the \textrm{Turing} machine
  combo), 2014.
\newblock \url{http://tinyurl.com/pv3n2lg}.

\bibitem{cl:smash}
Michael~J. Coulombe and Jayson Lynch.
\newblock Cooperating in video games? \textrm{Impossible! Undecidability} of
  team multiplayer games.
\newblock In {\em 9th International Conference on Fun with Algorithms}, 2018.

\bibitem{dkl:recursed-not-recursive}
Erik Demaine, Justin Kopinsky, and Jayson Lynch.
\newblock Recursed is not recursive: A jarring result, 2020.

\bibitem{dh:contraint-logic}
Erik~D. Demaine and Robert~A. Hearn.
\newblock Constraint logic: A uniform framework for modeling computation as
  games.
\newblock In {\em 2008 23rd Annual IEEE Conference on Computational
  Complexity}, pages 149--162, 2008.

\bibitem{e:thesis}
Alexander Esche.
\newblock {\em Mathematical Programming and \textrm{Magic: The Gathering}}.
\newblock PhD thesis, Northern Illinois University, 2018.

\bibitem{h:braid-undecidable}
Linus Hamilton.
\newblock Braid is undecidable, 2014.

\bibitem{wc:mcmc}
Colin~D. Ward and Peter~I. Cowling.
\newblock \textrm{Monte Carlo} search applied to card selection in
  \textrm{Magic: The Gathering}.
\newblock In {\em CIG'09 Proceedings of the 5th international conference on
  Computational Intelligence and Games}, pages 9--16, 2009.

\bibitem{wcp:mcmc}
Colin~D. Ward, Peter~I. Cowling, and Edward~J. Powley.
\newblock Ensemble determinization in \textrm{Monte Carlo} tree search for the
  imperfect information card game \textrm{Magic: The Gathering}.
\newblock In {\em IEEE Transactions on Computational Intelligence and AI in
  Games}, volume~4, 2012.

\bibitem{rules}
{Wizards of the Coast}.
\newblock \textrm{Magic: The Gathering} comprehensive rules, Aug 2018.
\newblock
  \url{https://magic.wizards.com/en/game-info/gameplay/rules-and-formats/rules}.

\end{thebibliography}

\begin{appendices}
\section{How to Play \textit{Magic: the Gathering}}\label{app:rules}
In this appendix we provide a brief overview of the game and its rules, with a focus on what is necessary to understand the Turing machine construction. The full \textit{Magic: the Gathering} Comprehensive Rules document \cite{rules} is over 200 pages of text and detailing them falls outside the purview of this paper.

\subsection{An Introduction to \textit{Magic}}

\textit{Magic: the Gathering} is a card game about magical combat. Each player begins with a deck of cards that they've chosen called their \textit{library}. Game proceeds by drawing cards from the library, casting spells from their hand to summon creatures or create effects, and attacking with their creatures. Creatures can engage in combat and deal damage to the opponent's creatures, as well as to the opponent themselves. When creatures die or when one-time effects are used, those cards are placed in a discard pile called the \textit{graveyard}. Each player begins with 20 \textit{life points} and once they are depleted that player loses the game. There are a few auxiliary ways that a game of \textit{Magic} can end, but they will not be relevant to our construction beyond ensuring that they do not occur.
\subsection{Types of Cards}
One important attribute of cards in \textit{Magic: the Gathering} is the \textit{type}. Cards with different types are affected by different cards and have different rules associated with them. There are five types that we will use in our construction. Each \textbf{bolded} term in this list is a type. 
\begin{enumerate}
    \item \textbf{Creatures} are permanents, which means that they stay in play after they've been cast. 
    Creatures are the only type of card that can engage in combat directly. Creatures have \textit{power} and \textit{toughness}, which determine their strength in combat: power determines how much damage they do, and toughness determines how much life they have. Standard notation for describing power and toughness separates them with a slash: a 3/2 creature has 3 power and 2 toughness. Creatures need to have a non-zero toughness to remain on the battlefield: any time a creature's toughness becomes 0 or less for any reason, it dies and is sent to the graveyard. Creatures are the primary component of the Turing machine and comprise both the tape and the head.
    \item \textbf{Artifacts} and \textbf{Enchantments} are also permanents, but do not have power/toughness and do not have the ability to attack. There are several artifacts and enchantments in our construction that provide important effects to keep the Turing machine running. Some enchantments are \textit{Aura}s and some artifacts are \textit{Equipment}: these two card types can be attached to one other permanent or player and modify or affect that permanent or player in some way. Artifacts and enchantments are two separate card types, though the difference between them is never relevant for our construction.
    \item \textbf{Instants} and \textbf{Sorceries} are cards that generate one-time effects, and are immediately discarded after being used, as opposed to being left in play the way permanents are.
    \item \textbf{Lands} are permanents that do not have a direct influence on the game. Instead, they provide a resource known as \textit{mana} which is required to cast most spells and activate most abilities.
\end{enumerate}
Some cards have subtypes in addition to types. Aura is an example of a subtype of enchantments. All creatures have subtypes called \textit{creature types} such as Goblin or Wizard that denote their race or class, and those creature types are used in various ways throughout the construction, in particular to track the symbols written onto the Turing tape.

\subsection{Editing Card Text and Types}
The Turing machine construction is only possible because certain \textit{Magic} cards allow modification of the text of other cards, to change colours or creature types. 
The card \textbf{Artificial Evolution} reads ``Change the text of target spell or permanent by replacing all instances of one creature type with another. The new creature type can't be Wall. \textit{(This effect lasts indefinitely.)}''

Also crucial is one of several cards such as \textbf{Glamerdye} which read ``Change the text of target spell or permanent by replacing all instances of one colour word with another''. Similarly, we can change what colour a permanent is with \textbf{Prismatic Lace} (``Target permanent becomes the colour or colours of your choice'').

For example, the card \textbf{Rotlung Reanimator} reads ``Whenever Rotlung Reanimator or another Cleric dies, create a 2/2 black Zombie creature token''.  By casting two copies of \textbf{Artificial Evolution} replacing `Cleric' with `Aetherborn' and `Zombie' with `Sliver', and one copy of \textbf{Glamerdye} to replace `black' with `white', we can change \textbf{Rotlung Reanimator} to read instead ``Whenever Rotlung Reanimator or another \textit{Aetherborn} dies, create a 2/2 \textit{white} \textit{Sliver} creature token''. This allows us to use creature types to track values throughout the computation, killing creature tokens with particular types and using \textbf{Rotlung Reanimator} as a conditional logic gate.

\textbf{Artificial Evolution} can be used to modify a creature's type as well as its text. It is useful to add extra creature types to some creatures without changing their text box: this can be accomplished with \textbf{Olivia Voldaren} (who has the ability ``Olivia Voldaren deals 1 damage to another target creature. That creature becomes a Vampire in addition to its other types''). We use \textbf{Artificial Evolution} to change \textbf{Olivia Voldaren} to add the creature type `Assembly-Worker' instead of `Vampire': we will use the type Assembly-Worker to denote infrastructure creatures which must be protected from all damage.

It should be noted that all these edits only persist for as long as the permanent remains on the battlefield. If an edited permanent changes zone, such as going to the graveyard or the library, these edits are lost.

\subsection{Tokens}
Some effects can create \textit{tokens} on the battlefield, which are also permanents. This is crucial to the construction of a Turing tape potentially millions of cells long with a bounded number of cards. Tokens may be creatures, generally with no abilities, or they may be copies of other permanents such as enchantments or artifacts. Unless an effect specifies otherwise, tokens are treated exactly like cards of the same type while they are on the battlefield.

Tokens can only exist on the battlefield --- if they ever leave the battlefield they cease to exist. If a creature token is dealt lethal damage, it dies, leaves the battlefield, and goes to the graveyard (triggering any effects that watch for those conditions such as \textbf{Rotlung Reanimator}'s). However, it does not continue to exist in the graveyard.  

\subsection{Abilities and the Stack}
There are many different types of abilities that cards in \textit{Magic: the Gathering} can have. The rules surrounding using abilities get rather complicated, but are crucial to understanding the mechanisms of the constructions in this paper. In this section, we restrict ourselves to explaining the bare minimum required to understand the construction.

Our construction is primarily concerned with \textit{static abilities} and \textit{triggered abilities}. Static abilities are abilities that are ``always on'' and modify the general rules of the game. For example, \textbf{Lure} reads ``Enchant creature. All creatures able to block enchanted creature do so.'' This is a static ability of the \textbf{Lure} permanent, affecting a creature, and removing all player choices about how to block that creature. \textbf{Lure} itself is an enchantment with subtype Aura.

Triggered abilities begin with one of the words ``When'', ``Whenever'' or ``At''. \textbf{Rotlung Reanimator} has a triggered ability that reads ``Whenever Rotlung Reanimator or another Cleric dies, create a 2/2 black Zombie creature token.'' \textbf{Rotlung Reanimator} is how we will perform many of the functions of the Turing machine. It is a creature with two subtypes: Zombie and Cleric.

Whenever a spell is cast or an ability is activated or triggered, it is first put in a holding area known as the \textit{stack}. When a spell or ability is on the stack, other players may add additional spells or abilities to the stack before the effect \textit{resolves} (takes effect). The stack in \textit{Magic} functions exactly like the data structure of the same name, with the spell or ability put on the stack first being carried out last and the spell or ability put on the stack last being carried out first. Players take turns getting \textit{priority}, which is the game's permission to cast spells and activate abilities. The player whose turn it is always gets priority first, and then the player whose turn it isn't. Once both players decide to not use their priority to put a spell or ability on the stack, the top effect on the stack is popped and resolves.

Sometimes two triggered abilities will try to go on the stack at the same time. In this case, the order is determined by \textit{Active Player, Nonactive Player (APNAP)} order. The active player is the one whose turn it is. Since this is the order the spells and abilities go on the stack, they will resolve in the reverse order (so the nonactive player's ability resolves first). If both effects are controlled by the same player, that player must choose the order to place them onto the stack.

\subsection{Phasing}\label{phasing}
\textit{Phasing} is an unusual ability some \textit{Magic: the Gathering} cards have that is crucial to the \tm. It allows a creature to be treated as if it doesn't exist -- in particular, its triggered abilities won't trigger -- but it stays on the battlefield, and so edits to its text by \textbf{Artificial Evolution} remain. At the very beginning of a player's turn (their untap step), all their phased-in permanents with phasing `phase out' (temporarily cease to exist) and all their phased-out permanents `phase in' (come back into existence).

\subsection{Counters, Combat, and Damage}
There are many effects that can change the power and toughness of creatures. Some of these are temporary and last until the end of turn, while others are permanent. Permanent changes are denoted by \textit{counters} placed on the creatures. In our construction, we will utilise +1/+1 and \minusminus counters. +1/+1 counters increase the power and toughness of a creature each by $1$, while \minusminus counters decrease them. If a creature ever has both +1/+1 and \minusminus counters, they cancel out and pairs of counters are removed until that creature only has one type of counters or has no counters.

Creatures can engage in combat. In order to do so, the attacking player (you may only attack on your turn) chooses the creatures they wish to attack with and announces that choice. Then the defending player may opt to have some of their creatures ``block.'' A creature chosen to block can only block one attacking creature, though multiple creatures can block the same attacking creature. Once blockers are chosen, all creatures deal damage simultaneously. Creatures deal damage based on their power, so a $3$ power creature deals $3$ damage.

Normally, damage dealt to creatures doesn't do anything unless the cumulative amount of damage dealt to a creature over the course of a turn equals or exceeds its toughness, at which point that creature dies and is sent to the graveyard. At the end of the turn, creatures are reset to having zero damage. For our construction, however, it is more convenient if most damage persists through several turns, so we utilize an ability known as \textit{infect}. Creatures with infect deal damage to creatures by putting a number of \minusminus counters onto the other creature equal to the amount of damage that they would have dealt. This is done instead of dealing normal damage.

When damage is dealt to players, that player's life total is decreased by the amount of damage dealt (players start with $20$ life, and a player with zero or fewer life loses the game). If the source of that damage has infect, the damage is dealt by giving the player ``poison counters'' instead. A player with $10$ or more poison counters loses the game.

Players losing the game, creatures dying due to having taken too much damage, and cancelling +1/+1 and \minusminus counters are known as \textit{state-based actions}. State-based actions are all ``cleanup'' activities that maintain the correct state of the game. Every time a player would gain priority or an effect finishes resolving, players check if there are any state-based actions that need to be carried out, and perform them if so. All relevant state-based actions occur simultaneously, do not use the stack, and cannot be responded to.

\subsection{The Structure of a Turn}
Play in \textit{Magic: the Gathering} consists of players alternatively taking turns. Each turn is divided into phases, with each phase divided into steps such as the \textit{upkeep step}. Many cards in \textit{Magic} say something like ``At the beginning of your upkeep...'' or ``At the beginning of your main phase...'' At the beginning of each step and phase, the first thing done is always to check for such abilities and put them on the stack. During each of these phases, there is the option to cast spells and activate abilities, but some additionally have game actions players are required to take after all relevant effects have resolved.

The first phase of each turn is the \textit{beginning phase}, which consists of the \textit{untap step}, the \textit{upkeep step}, and the \textit{draw step}. During the untap step players first carry out any phasing effects, and then untap all permanents they control. There are no game actions during the upkeep step, and during the draw step the active player draws one card.

The second phase is the \textit{first main phase}, where the bulk of the play occurs during a normal game, though nothing happens in our construction. The third phase is the \textit{combat phase}, which is where combat occurs.  We will also have nothing relevant happen in the fourth phase (the \textit{second main phase}) and only minimal effects during the final \textit{end phase}.
\end{appendices}

\end{document}